\documentclass[letterpaper, 10 pt, conference]{ieeeconf}  
\usepackage[tbtags]{amsmath}
\usepackage{amssymb}
\usepackage{booktabs}
\usepackage{xurl}
\usepackage[hidelinks,breaklinks=true]{hyperref}
\usepackage[caption=false,font=footnotesize]{subfig}
\usepackage{graphicx}
\usepackage{algorithm,algorithmic}
\newtheorem{theorem}{Theorem}
\newtheorem{lemma}{Lemma}
\newtheorem{assumption}{Assumption}

\usepackage[noadjust]{cite}

\IEEEoverridecommandlockouts                              

\usepackage{xcolor}
\newcommand{\name}{\textsf{HardNet++}}

\newcommand{\revised}[1]{{\color{black}#1}}
\newcommand{\clip}{\operatorname{clip}}

\title{\LARGE \bf
\revised{HardNet++:} Nonlinear Constraint Enforcement in Neural Networks
}
\author{Andrea Goertzen, Kaveh Alim, Youngjae Min, and Navid Azizan
\vspace{0.5em}\\Massachusetts Institute of Technology
}

\begin{document}

\maketitle
\thispagestyle{empty}
\pagestyle{empty}

\begin{abstract}

Enforcing constraint satisfaction in neural network outputs is critical for safety, reliability, and physical fidelity in many control and decision-making applications. While soft-constrained methods penalize constraint violations during training, they do not guarantee constraint adherence during inference. Other approaches guarantee constraint satisfaction via a projection layer, \revised{but often rely on the existence of a tractable projection onto the feasible set, limiting their utility in more general problem settings.} Many real-world problems of interest are nonlinear \revised{and lack the special structure admitting a tractable projection}, motivating the development of methods that can enforce \revised{general} nonlinear constraints. 
To this end, we introduce \revised{\name}, a constraint-satisfaction method that enforces linear and nonlinear equality and inequality constraints. Our approach iteratively adjusts the network output via damped local linearizations of the constraints. \revised{Each iteration is differentiable, admitting an end-to-end training framework, where the constraint satisfaction layer is active during training.} We show that under certain regularity conditions, this procedure enforces nonlinear constraint satisfaction to arbitrary tolerance. Finally, we \revised{demonstrate tight constraint adherence without loss of optimality in a learning-for-optimization context, where we apply this method to a nonlinear model predictive control problem.\footnote{The code is available at: \url{https://github.com/andreagoertzen/HardNet-plus-plus}}} 

\end{abstract}

\section{INTRODUCTION}\label{sec:intro}
Neural networks have demonstrated remarkable success in modeling complex physical dynamics and learning surrogate representations of computational processes, often with improved efficiency compared to traditional numerical methods. As a result, they are increasingly used in scientific machine learning, where models are expected to represent underlying physical systems. In these settings, outputs must often satisfy domain-specific constraints derived from first principles, such as conservation of mass or energy, symmetry relationships, or governing equations. Similarly, safety-critical applications require strict enforcement of bounds on process variables or stability constraints. Standard neural networks, however, do not inherently guarantee adherence to such constraints, which can lead to physically inconsistent or infeasible predictions, limiting the practicality of such models. This limitation motivates the development of principled constraint enforcement methods that preserve feasibility in learned models. 

Beyond scientific modeling, neural networks are also increasingly used to approximate optimization solvers. Many applications, including model predictive control, resource allocation, and real-time decision making, require repeatedly solving high-dimensional optimization problems online. Although modern optimization solvers are highly optimized, repeated real-time solutions can still be computationally expensive, motivating the use of neural networks as fast surrogate solvers. However, these optimization problems typically include equality and inequality constraints that define the feasible solution set. Consequently, the practical deployment of neural network surrogate solvers depends on the ability to ensure that predicted solutions remain feasible.

Despite the importance of constraint satisfaction in both scientific machine learning and learned optimization, enforcing constraints in neural network outputs remains challenging. Soft-constrained approaches encourage constraint satisfaction by penalizing violations during training \cite{raissi2019pinn}, but they generally cannot guarantee feasibility at inference time, particularly for inputs outside the training distribution. To address this limitation, hard-constrained methods have been developed to enforce feasibility by design \cite{min2024hardnet}. This is often done via specific parameterizations \cite{balestriero2023police} or projection-based methods that project the network output onto a feasible set defined by the constraints. For linear \emph{equality} constraints, a closed-form projection exists to minimize the Euclidean distance between the network output and a projected feasible point \cite{chen2024physics}. HardNet \cite{min2024hardnet} introduced a closed-form parallel projection that enforces both linear equality and inequality constraints while provably retaining the network's expressivity. \revised{Beyond linear constraints, ECO~\cite{goertzen2025eco} enforces a convex quadratic constraint in closed form.}

While closed-form projections offer efficient implementation and guaranteed constraint satisfaction, there are problems for which, inevitably, a single closed-form projection onto the feasible set does not exist, in which case iterative methods are often applied. 
\revised{Alternating projection methods are effective when the constraint set admits a decomposition into simpler sets with tractable projections}. PiNet \cite{grontas2026pinet} uses Douglas--Rachford to alternate projections onto convex sets whose Euclidean projections are known. \revised{LMI-Net \cite{tang2026lmi} satisfies linear matrix inequality (LMI) constraints by alternating between an affine equality projection and the projection onto the positive semidefinite cone, both of which can be computed in closed form.}
Other alternating projection methods use Dykstra's projection algorithm with success in solving optimization problems with linear constraints \cite{chen2018approximating,cristian2023end}. 
\revised{In cases where a convenient decomposition of the constraint set is unavailable, approaches rely on gradient-based updates to move network outputs toward constraint satisfaction.} 
DC3 \cite{donti2021dc3} takes gradient steps along the manifold of equality constraint satisfaction to satisfy inequality constraints, \revised{although convergence guarantees are limited to linear constraints}. For arbitrary nonlinear equality constraints, ENFORCE \cite{lastrucci2025enforce} iteratively projects onto local linearizations of the constraints with a closed-form Euclidean projection \cite{chen2024physics}, but does not handle inequality constraints. 

\revised{To address the limitations of current constraint enforcement methods, we introduce \name, a general equality and inequality constraint satisfaction approach. Unlike methods that rely on the existence of a tractable projection or decomposition of the feasible set, \name\ handles constraints of any form, making it particularly well-suited for general nonlinear constraint satisfaction.  \name\ iteratively enforces constraints using a damped projection based on local linearizations. In comparison to other gradient-based methods, our approach handles both inequality and equality constraints and provides theoretical convergence guarantees under standard regularity assumptions.}
Our specific contributions are:
\begin{enumerate}
    \item A differentiable projection framework that simultaneously enforces nonlinear equality and inequality constraints on neural network outputs.
    
    \item Convergence guarantees showing that the proposed iterative projection achieves arbitrarily small constraint violations.
    
    \item Experimental validation on a nonlinear MPC task demonstrating reliable constraint satisfaction with minimal degradation of optimal performance.
\end{enumerate}

\begin{figure}[t]
\centering

\includegraphics[width=1.0\linewidth]{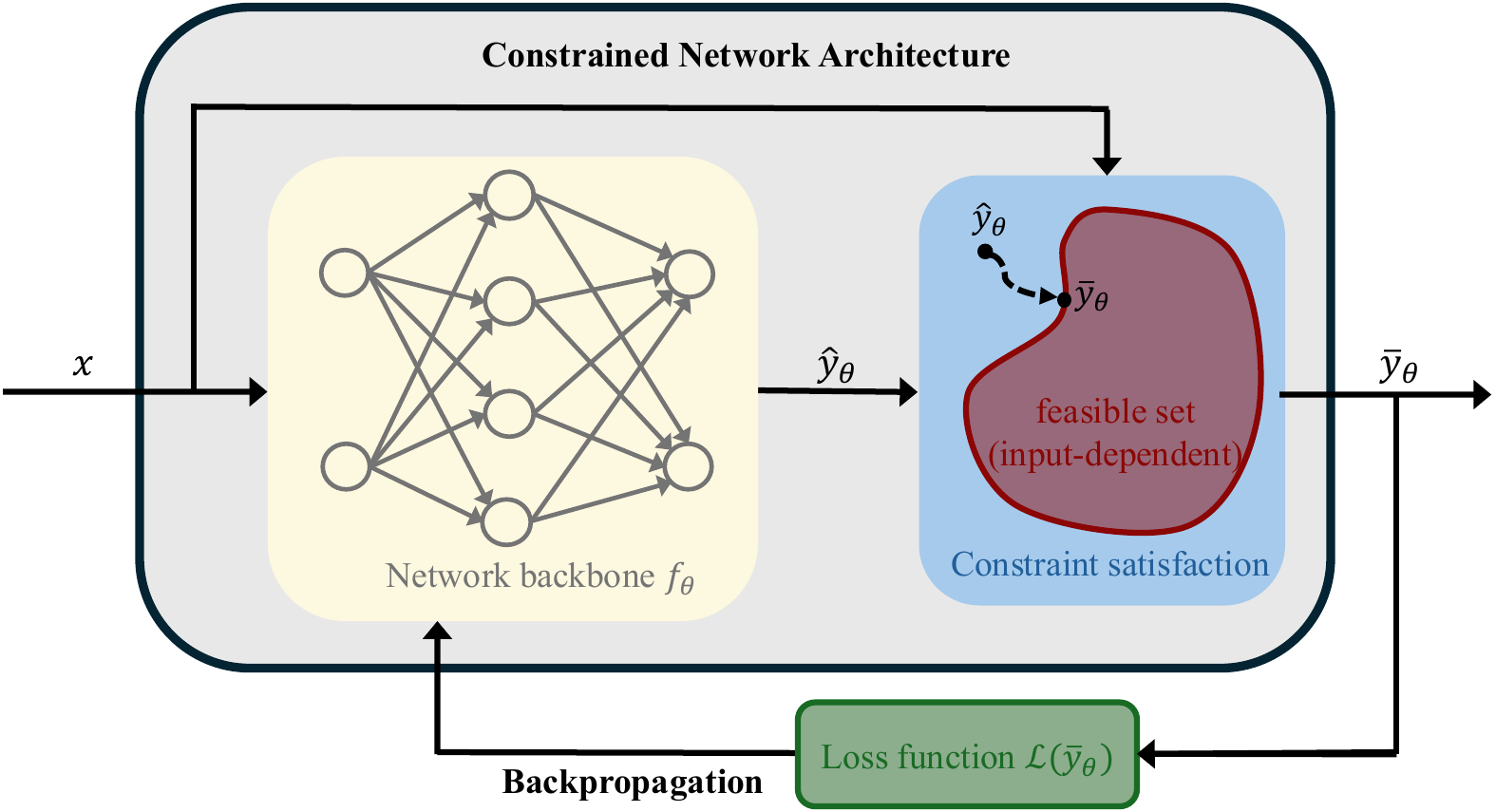}

\caption{Our model structure. The network output $\hat{y}_\theta$ is mapped to a point $\bar{y}_\theta$ on the feasible set through a differentiable procedure.}\label{fig:structure}

\end{figure}

\section{Problem Formulation}\label{sec:problem_form}
As discussed in Section~\ref{sec:intro}, the ability to incorporate critical domain knowledge into neural networks would improve their utility in scientific applications by ensuring physical fidelity or adherence to safety constraints. The goal of this work, therefore, is to develop a methodology that ensures neural network outputs satisfy nonlinear constraints \revised{whose projections cannot be computed in closed form}. Specifically, we map the unconstrained neural network output to a point that simultaneously satisfies nonlinear equality and inequality constraints, as in Figure~\ref{fig:structure}. Consider a neural network output $\hat{y}_\theta = f_\theta(x)$, where $x\in\mathbb{R}^p$ is the network input, $\hat{y}_\theta\in\mathbb{R}^n$ is the unconstrained output, and $\theta$ parameterizes the network. \revised{Let $c_x:\mathbb{R}^n\to\mathbb{R}^m$ denote $m$ input-dependent constraints of any form.} 
We aim to map $\hat{y}_\theta$ to $\bar{y}_\theta$ such that
\revised{\begin{equation}\label{eq:proj}
\bar{y}_\theta \in \left\{ y \in \mathbb{R}^n : b^l_x\le c_x(y) \le b^u_x\; \right\}.
\end{equation}}

\revised{Here, $b^u_x\in\mathbb{R}^m$ and $b^l_x\in\mathbb{R}^m$ are input-dependent upper and lower bounds. Note that the constraint form in \eqref{eq:proj} is general and includes equality constraints by setting their corresponding entries of $b_x^l$ and $b_x^u$ equal to each other}. 
Throughout this paper, we measure constraint violation with the residual
\revised{\[r(y):= \mathrm{ReLU}\left(c(y)-b^u\right)-\mathrm{ReLU}\left(b^l-c(y)\right),\]}
where the $\mathrm{ReLU}(\cdot) =\mathrm{max}(0,\cdot)$ operator is applied elementwise. We drop the dependence of \revised{$c$, $b^l$, and $b^u$} on the network input $x$ for notational convenience, although we emphasize that this method handles input-dependent constraints, as demonstrated in Section~\ref{sec:results}. \revised{Note that for an equality constraint $c_i$, where $i\in\{0,1,...,m\}$ denotes the index of a single constraint, $b^u_i=b^l_i$, and the respective entry of the residual becomes $c_i(y)-b_i^u$.} The Jacobian of the constraints with respect to $y$ is $J_c(y)$ where $J_{c,ij} = \frac{\partial c_i(y)}{\partial y_j}$.
The Jacobian of the constraint residuals $r(y)$ is $J_r(y)$ where $J_{r,ij} = \frac{\partial r_i(y)}{\partial y_j}$. A subscript $k$ denotes the $k$-th iterate of an iterative method, where $k \in \{0,1,2,\dots\}$. 

\section{Constrained Learning Methodology}
\subsection{Constraint Enforcement via Damped Local Linearizations}
While no closed-form solution exists for projections onto general nonlinear constraints \revised{$b^l\leq c(y)\leq b^u$}, there exist methods that can project onto linear equality or inequality constraint sets in closed form. \revised{We leverage this by linearizing the constraints around a known point $y_0$, with $c(y_0+\delta )\approx c(y_0)+J_c(y_0)\delta$. A step $\delta $ can then be computed such that $ b^l\leq c(y_0)+J_c(y_0)\delta \leq b^u$, although this does not ensure that $b^l\leq c(y_0+\delta)\leq b^u $. Instead, the step $\delta $ is intended to move closer to satisfaction of the true constraint, motivating the use of an iterative update step.} 
Specifically, we iteratively define the linearized constraint

\begin{equation}\label{eq:linear}
b^l\leq c(y_k)+J_c(y_k)[y_{k+1}-y_k] \leq b^u,
\end{equation} 


\revised{where $y_k$ denotes the fixed value of the current iterate, and each iterate takes a step $\delta_k  = y_{k+1}-y_k$. Importantly, \eqref{eq:linear} is \textit{linear} in $y_{k+1}$, admitting a closed-form HardNet \cite{min2024hardnet} projection onto a feasible point, as long as the number of independent constraints is less than or equal to the number of variables.} That is, when $J_{c,k}:=J_c(y_k)$ has full row rank, there exists a closed-form projection onto a point $y_{k+1}$ that satisfies \eqref{eq:linear}. Specifically, the HardNet projection is
\begin{equation}\label{eq:HardNet}
    y_{k+1} = y_k -J_{c,k}^\top (J_{c,k}J_{c,k}^\top )^{-1}r(y_k).
\end{equation}
The projection defined by \eqref{eq:HardNet} is well-suited for linear constraints due to the parallel projection geometry inherent to HardNet. That is, points are projected parallel to the boundaries of the satisfied constraints, allowing satisfaction of violated constraints without violating these already-satisfied constraints. In the case of nonlinear constraints, where the constraint boundary does not necessarily extend across the entire domain of the variables, a strictly parallel projection may fail to move toward violated constraints. See Figure~\ref{fig:example_vanilla} for an example of this failure mode.

\begin{figure}[t]
\centering

\subfloat[Vanilla HardNet iterates \label{fig:example_vanilla}]{
\includegraphics[width=0.48\linewidth]{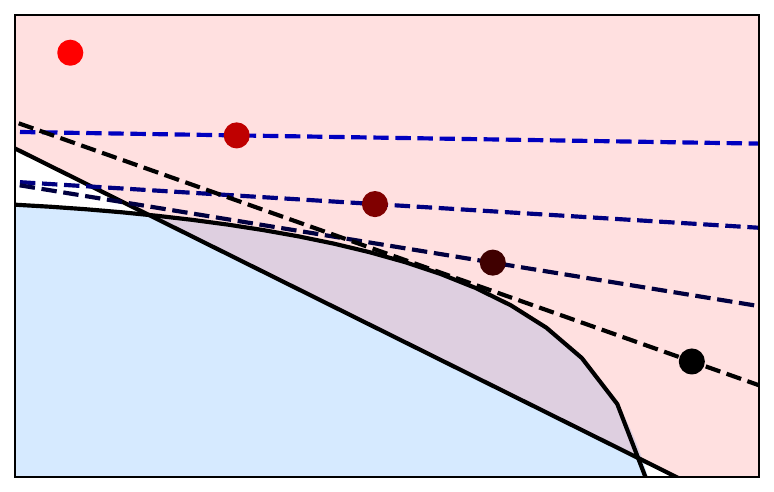}
}
\subfloat[\name\ iterates \label{fig:example_ours}]{
\includegraphics[width=0.48\linewidth]{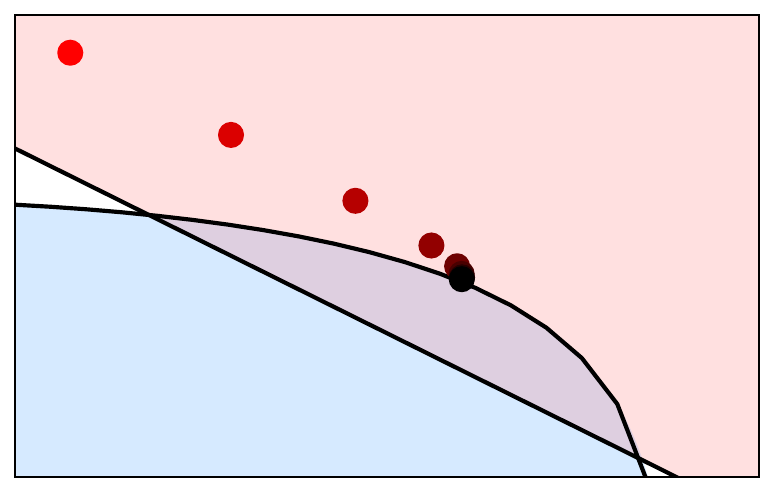}
}

\caption{Projection steps based on local linearizations under two projection methods. The feasible set is the overlap between the red and blue regions. The red point is the initial point, and points get darker in subsequent iterations. \revised{The dashed lines in (a) are the linearizations of the constraint defining the blue region.} An iterative projection using HardNet is forced to stay parallel to the boundary of the satisfied linear constraint, preventing it from reaching the feasible set. The damped method allows movement orthogonal to the constraint boundary, converging to the feasible set.}

\end{figure}

To circumvent this phenomenon inherent to a vanilla HardNet implementation, we add a damping term to our projection. 
\begin{equation}\label{eq:ours}
    \revised{y_{k+1}^\name} = y_k -J_{c,k}^\top (J_{c,k}J_{c,k}^\top +\varepsilon I)^{-1}\left[r(y_k)\right] \quad \varepsilon>0
\end{equation}
The regularization term, $\varepsilon I$, offers several key advantages. First, it damps the restriction on the direction orthogonal to satisfied constraints, allowing the projection method to converge to the feasible set. See Figure~\ref{fig:example_ours} for an example of this improvement. It also allows $(JJ^\top +\varepsilon I)x=b$ to have a unique solution for $x$, regardless of the number of constraints and variables. That is, $JJ^\top +\varepsilon I$ is invertible, even when $JJ^\top $ is not invertible, although in practice we use efficient linear solvers to compute the projection in \eqref{eq:ours} rather than explicitly computing the inverse. Finally, the addition of the $\varepsilon I$ term behaves as a regularizer, penalizing steps far away from the region in which the linear approximation is valid. This particular effect is discussed further in the next section. 

Importantly, this iterative procedure is differentiable. This enables an end-to-end training scheme, where the constrained output is considered during training, rather than only being applied at inference time. This is important because training seeks to ensure the constrained output minimizes the loss function, rather than the unconstrained output, which could be pushed away suboptimally when constraints are enforced only during inference. 











\subsection{\revised{Characterization via} Levenberg-Marquardt}
The damped projection method \revised{can be further characterized through its connection} to the Levenberg-Marquardt damped least-squares algorithm \cite{levenberg1944method}.
\revised{Both methods use damped updates based on iterative local linearizations, but they differ fundamentally in how they treat constraints that are already satisfied. In our setting, Levenberg-Marquardt could be applied to minimize the squared constraint residual $\|r(y)\|_2^2=\sum_i r_i(y)^2$.

While directly minimizing this residual}
does not necessarily admit a closed-form solution, minimizing a local linearization does. 
\revised{In particular, consider the linearized objective with a damping term penalizing large steps:
\begin{align}\label{eq:linearized_obj}
    \delta_{k} := y_{k+1}-y_k &=\underset{\delta}{\text{argmin }} \|r_k + J_{r,k}\delta\|_2^2 + \varepsilon\|\delta\|_2^2,
\end{align}
where $r_k:=r(y_k)$} for notational convenience. Importantly, the Jacobian $J_{r,k}$ here is the Jacobian of the \textit{constraint residual}, rather than the constraint itself. Taking the derivative of the objective in \eqref{eq:linearized_obj} and setting it to 0 gives \revised{the unique solution} 
\revised{
\begin{equation}\label{eq:LM}
    \delta_k = -(J_{r,k}^\top J_{r,k}+\varepsilon I)^{-1}J_{r,k}^\top r_k
    =-J_{r,k}^\top (J_{r,k}J_{r,k}^\top +\varepsilon I)^{-1}r_k,
\end{equation}
where the last equality follows from $(J_{r,k}^\top J_{r,k}+\varepsilon I)^{-1}J_{r,k}^\top = J_{r,k}^\top (J_{r,k}J_{r,k}^\top +\varepsilon I)^{-1}$ with the identity matrices understood to have the appropriate dimensions.


Thus, the Levenberg-Marquardt update in~\eqref{eq:LM} has the same algebraic form as our projection in~\eqref{eq:ours}, but with $J_{r,k}$ in place of $J_{c,k}$. This distinction is important: using the residual Jacobian causes already-satisfied inequality constraints to drop out of the update, whereas using the constraint Jacobian preserves their local geometry. As a result, the two methods induce \textit{fundamentally different projection directions}. To elaborate, we first characterize our projection as solving an optimization problem analogously:
\begin{theorem}
\label{thm:ours_opt}
At each iteration $k$, \name{} in~\eqref{eq:ours} solves
\begin{align} \label{eq:ours_opt}
    \delta_k^\name =\underset{\delta}{\text{argmin }} \|\tilde{r}_k(y_k) + J_{\tilde{r}_k}(y_k)\delta\|_2^2 +\varepsilon \|\delta\|_2^2,
\end{align}
where the modified residual is defined as
\begin{equation}
    \tilde{r}_k(y) := c(y)-\clip\big(c(y_k), b^l,b^u\big),
\end{equation}
with the elementwise clipping $\clip(v, v_\text{min}, v_\text{max}):=\max(v_\text{min},\min(v_\text{max}, v))$.

For all $y$ in a neighborhood of $y_k$, this modified residual differs from the original residual $r(y)$ in that constraints already satisfied at $y_k$ are treated as local equality constraints:
\begin{equation} \label{eq:new_residual}
    \tilde{r}_{k,i}(y) = \begin{cases}
        c_i(y)-c_i(y_k) & \text{if } b^l_i\leq c_i(y_k)\leq b^u_i\\
        r_i(y) & \text{otherwise}
    \end{cases},
\end{equation}
for each $i$-th component.
\end{theorem}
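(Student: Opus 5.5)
The plan has two parts. First, the update \eqref{eq:ours} is exactly the Levenberg--Marquardt solution \eqref{eq:LM} applied to the modified residual $\tilde r_k$. Second, $\tilde r_k$ has the piecewise form \eqref{eq:new_residual} near $y_k$.

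\textbf{Step 1: value and Jacobian of $\tilde r_k$ at $y_k$.}
The clipping term $\clip(c(y_k),b^l,b^u)$ is a constant vector: it depends on the fixed iterate $y_k$, not on the variable $y$. Differentiating therefore gives
\[
J_{\tilde r_k}(y) = J_c(y), \qquad\text{so}\qquad J_{\tilde r_k}(y_k)=J_{c,k}.
\]
Next, evaluate $\tilde r_k$ at $y_k$ componentwise, using the identity $v-\clip(v,l,u)=\mathrm{ReLU}(v-u)-\mathrm{ReLU}(l-v)$ for $l\le u$. I would check this in three cases:
\begin{itemize}
\item if $v>u$, both sides equal $v-u$;
\item if $v<l$, both sides equal $v-l$;
\item otherwise, both sides equal $0$.
\end{itemize}
With $v=c_i(y_k)$ this gives $\tilde r_k(y_k)=r(y_k)=r_k$.

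\textbf{Step 2: solve the minimization.}
Substituting Step 1 into \eqref{eq:ours_opt}, the objective becomes $\|r_k+J_{c,k}\delta\|_2^2+\varepsilon\|\delta\|_2^2$. This is strictly convex for $\varepsilon>0$, so it has a unique minimizer. Setting the gradient to zero and repeating the computation leading to \eqref{eq:LM}, with $J_{c,k}$ in place of $J_{r,k}$, gives
\[
\delta = -(J_{c,k}^\top J_{c,k}+\varepsilon I)^{-1}J_{c,k}^\top r_k .
\]
The push-through identity $(A^\top A+\varepsilon I)^{-1}A^\top=A^\top(AA^\top+\varepsilon I)^{-1}$ follows from $A^\top(AA^\top+\varepsilon I)=(A^\top A+\varepsilon I)A^\top$ and the invertibility of both damped matrices. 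Applying it yields exactly the step in \eqref{eq:ours}, so $\delta_k^{\name}$ is this minimizer.

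\textbf{Step 3: local characterization \eqref{eq:new_residual}.}
I would split on the status of constraint $i$ at $y_k$.
\begin{itemize}
\item \emph{Satisfied at $y_k$.} Then $\clip(c_i(y_k))=c_i(y_k)$, so $\tilde r_{k,i}(y)=c_i(y)-c_i(y_k)$ for all $y$.
\item \emph{Violated from above, $c_i(y_k)>b^u_i$.} Then $\tilde r_{k,i}(y)=c_i(y)-b^u_i$. By continuity of $c_i$ (assumed differentiable), $c_i(y)>b^u_i$ on a neighborhood of $y_k$. There $r_i(y)=c_i(y)-b^u_i$, because $\mathrm{ReLU}(b^l_i-c_i(y))=0$ since $b^l_i\le b^u_i<c_i(y)$.
\item \emph{Violated from below.} This case is symmetric.
\end{itemize}
Intersecting the finitely many neighborhoods gives a single neighborhood on which \eqref{eq:new_residual} holds for every $i$.

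\textbf{Main obstacle.}
The difficulty is interpretive rather than technical. The claim "for all $y$ in a neighborhood" concerns only the violated components, and it requires strict violation together with continuity of $c$. The satisfied components agree with \eqref{eq:new_residual} globally. The optimization identity in Step 2 uses only the value and Jacobian of $\tilde r_k$ at $y_k$, so it holds regardless of this neighborhood subtlety.
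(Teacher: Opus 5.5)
Your proposal is correct and follows the paper's proof sketch. The paper likewise identifies \eqref{eq:ours} with the minimizer of $\|r_k+J_{c,k}\delta\|_2^2+\varepsilon\|\delta\|_2^2$ via the Levenberg--Marquardt algebra, notes that $J_{\tilde r_k}=J_c$ and $\tilde r_k(y_k)=r_k$, and leaves \eqref{eq:new_residual} to a case check; you just supply the omitted details (the clip--ReLU identity, the push-through identity, and the continuity argument giving a common neighborhood for strictly violated constraints).
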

\begin{proof}
We provide a proof sketch. Similarly to the equivalence of \eqref{eq:linearized_obj} and \eqref{eq:LM} in the Levenberg-Marquardt algorithm, \name{} in~\eqref{eq:ours} is equivalent to
\begin{equation}
    \delta_k^\name = \underset{\delta}{\text{argmin }} \|r_k + J_{c,k}\delta\|_2^2 +\varepsilon \|\delta\|_2^2.
\end{equation}
This is equivalent to the more interpretable form \eqref{eq:ours_opt} since the modified residual $\tilde{r}_k$ has the same Jacobian as the constraint ($J_{\tilde{r}_k}=J_{c}$) while attaining the same value as the original residual $r$ at $y_k$ ($\tilde{r}_k(y_k)=r_k$). The property \eqref{eq:new_residual} can be verified for each case.
\end{proof}

This characterization in~\eqref{eq:ours_opt} shows that \name{} minimizes a locally linearized, damped approximation of the squared modified residual $\tilde{r}^{(k)}$, while treating already-satisfied constraints as local equality constraints: the objective $\|\tilde{r}^{(k)}(y)\|_2^2$ penalizes violations of $c_i(y)=c_i(y_k)$ through the term $\big(c_i(y)-c_i(y_k)\big)^2$ for each satisfied constraint $c_i$. In the Levenberg–Marquardt update, already-satisfied inequality constraints have both zero residual and zero residual Jacobian, so they are effectively ignored. In contrast, \name{} retains the Jacobian of every constraint. Thus, for satisfied constraints, the update still accounts for the local constraint geometry even when the current residual is zero. In the undamped HardNet projection, this forces the update direction parallel to the linearized boundaries of already-satisfied constraints. The damping term relaxes this restriction, allowing limited update in directions normal to those boundaries while still biasing the update toward preserving satisfaction. This softened geometry enables progress toward violated nonlinear constraints in settings where a strictly parallel projection may stall.
}

\subsection{Learning with Hard Constraints for Optimization}\label{sec:learned_opt}
The proposed methodology for enforcing constraints in neural network outputs is particularly relevant to learned optimization solvers. Consider an optimization problem of the form
\begin{equation}
    \underset{y}{\text{min }} Z_x(y)\quad{\text{subject to}\quad g_x(y) \le 0,\; h_x(y) = 0}
\end{equation}
that needs to be solved for different instances of $x$. The idea is to train a neural network to map an optimization context $x$ to a candidate solution $y_\theta$, rather than solving a constrained optimization independently at each instance. Often, the objective function $Z_x(y)$ itself is chosen as the loss function during training, \revised{rather than employing a supervised training approach, where the loss is defined by a distance between the predicted solution and a ground truth optimum}. Our methodology is well-suited for learned optimization problems because we can ensure the feasibility of the candidate solution $y_\theta$ via our iterative projection method.

\section{Convergence Guarantees}
In this section, we show that, under a standard set of assumptions, our projection method converges asymptotically to the feasible set. We define the violation energy as
\begin{equation}\label{eq:V}
V(y) := \frac{1}{2}\|r(y)\|_2^2.
\end{equation}
We establish two standard optimization conditions for our objective function
\(V(y)\).

\begin{assumption}
\label{assumption:smoothness}
{($L$-Smoothness)}: The violation energy $V(y)$ is $L$-smooth, meaning its gradients are $L$-Lipschitz continuous. 
\end{assumption}

Lemma~\ref{lem:quadratic_upper_bound} shows how this assumption provides a quadratic upper bound on the function's growth, controlling the region where our linear approximation is valid.

\begin{lemma} 
\label{lem:quadratic_upper_bound}

Let $V(y)$ be an $L$-smooth function on a convex domain, then the following quadratic upper bound holds for any $y_1, y_2$:
\[
V(y_2) \le V(y_1) + \nabla V(y_1)^\top  \Delta y + \frac{L}{2} \| \Delta y \|_2^2,
\]
where $\Delta y = y_2 - y_1$. (\cite{nesterov2013introductory}, Theorem 2.1.5)
\end{lemma}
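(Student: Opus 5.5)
The plan is to use the fundamental theorem of calculus along the line segment from $y_1$ to $y_2$; convexity of the domain guarantees this segment stays in the domain where $V$ is differentiable. Define $\phi(t) := V(y_1 + t\Delta y)$ for $t\in[0,1]$. By the chain rule $\phi'(t) = \nabla V(y_1+t\Delta y)^\top \Delta y$. Since $\nabla V$ is Lipschitz, it is continuous, so $\phi$ is continuously differentiable on $[0,1]$ and
\[
V(y_2) - V(y_1) = \int_0^1 \nabla V(y_1+t\Delta y)^\top \Delta y \, dt .
\]

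Next I will add and subtract $\nabla V(y_1)^\top \Delta y$ inside the integral. This gives
\[
V(y_2) - V(y_1) - \nabla V(y_1)^\top \Delta y = \int_0^1 \big(\nabla V(y_1+t\Delta y)-\nabla V(y_1)\big)^\top \Delta y\, dt .
\]
I then bound the integrand by Cauchy--Schwarz, and bound the gradient difference using Assumption~\ref{assumption:smoothness}:
\[
\|\nabla V(y_1+t\Delta y)-\nabla V(y_1)\|_2 \le L\,t\,\|\Delta y\|_2 .
\]
So the integrand is at most $L t \|\Delta y\|_2^2$. Integrating $t$ over $[0,1]$ gives the factor $\tfrac12$, which yields the claimed bound $V(y_2)\le V(y_1)+\nabla V(y_1)^\top\Delta y+\tfrac L2\|\Delta y\|_2^2$.

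There is no real obstacle, since this is the textbook descent lemma (Nesterov, Thm.~2.1.5). The only point needing care is justifying the integral representation. This requires differentiability of $V$ along the segment, which Assumption~\ref{assumption:smoothness} supplies because $\nabla V$ exists and is Lipschitz, hence continuous. It also requires the segment to lie in the domain, which is exactly why convexity is assumed.

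It is worth remarking that $V=\tfrac12\|r\|^2$ involves ReLUs. For inequality constraints, $\mathrm{ReLU}(\cdot)^2$ is $C^1$, so $V$ is differentiable whenever $c$ is. The Lipschitz property of $\nabla V$ is therefore a genuine assumption on $c$ rather than something automatic, and the lemma takes it as given.
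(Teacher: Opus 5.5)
Your argument is correct. It is the standard fundamental-theorem-of-calculus proof of the descent lemma behind the result the paper cites from Nesterov; the paper itself gives no proof beyond that citation. Your proof uses only convexity of the domain and Lipschitzness of $\nabla V$, not convexity of $V$ (which Nesterov's Theorem 2.1.5 assumes), and that is the right generality here since $V=\frac{1}{2}\|r\|_2^2$ is generally nonconvex.
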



It is worth noting that although Assumption~\ref{assumption:smoothness} is stated globally for notational simplicity, the proof only uses the quadratic upper bound from Lemma~\ref{lem:quadratic_upper_bound} along the iterates.
Therefore, it is sufficient that \(V\) admit an upper bound \(L\) on its local smoothness over any convex region containing the iterates (e.g., the initial sublevel set). 

\begin{assumption}
\label{assumption:PL}
{(P\L~Condition \cite{PL})} The function $V(y)$ with minimizer $y^\star$ satisfies the $\mu$-P\L~condition
\[
\forall y: \frac{1}{2} \|\nabla V(y)\|_2^2 \ge \mu \left( V(y) - V(y^\star) \right).
\]
\end{assumption}

\revised{This assumption connects gradients at any point to sub-optimality, allowing us to translate arguments from gradient space to sub-optimality. While it is a standard regularity assumption in nonconvex optimization, in our setting with~\eqref{eq:V}, a simple sufficient condition is a mild nondegeneracy condition on the violated constraints. Since already-satisfied inequality constraints do not contribute to $V(y)$ and $\nabla V(y)$, it suffices to consider the subset of violated and equality constraints. If the corresponding residual Jacobian has uniformly full row rank such that its transpose has minimum singular value uniformly bounded below by $\sigma_\text{min}>0$, then $\|\nabla V(y)\|_2^2=\|J_r(y)^\top r(y)\|_2^2\ge\sigma_\text{min}^2\|r(y)\|_2^2$, which implies the P\L~ condition with $\mu=\sigma_\text{min}^2$ whenever a feasbile point exists so that $V^*=0$.} 


\revised{
\begin{assumption}
\label{assumption:Lipschitz}
{(Lipschitzness of the Constraints)}
The constraint $c(y)$
is differentiable and \(G\)-Lipschitz on the domain of interest, i.e.,
\[
\|c(y_1)-c(y_2)\|_2 \le G\|y_1-y_2\|_2
\qquad \forall y_1,y_2.
\]
\end{assumption}
}

\revised{
 This assumption implies an upper bound on the Gram Matrix \(H_k = J_kJ_k^\top\), where \(J_k = J_c(y_k)\). Lemma~\ref{lem:bounded_eigenvals} gives a bound on the maximum eigenvalue of \(H_k\).
}

\revised{
\begin{lemma}
\label{lem:bounded_eigenvals}
Under Assumption~\ref{assumption:Lipschitz}, the maximum eigenvalue of \(H_k = J_kJ_k^\top\) satisfies
\[
\lambda_{\max}(H_k)\le G^2.
\]
\end{lemma}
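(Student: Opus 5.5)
The plan is to turn the $G$-Lipschitz property of $c$ in Assumption~\ref{assumption:Lipschitz} into a bound on the operator norm of the Jacobian, $\|J_c(y)\|_2 \le G$ for every $y$ in the domain of interest. Then I would use the identity $\lambda_{\max}(J J^\top) = \sigma_{\max}(J)^2 = \|J\|_2^2$, applied at $y = y_k$.

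First I show $\|J_c(y)\|_2\le G$. Fix $y$ in the interior of the domain and a unit vector $v\in\mathbb{R}^n$. Since $c$ is differentiable at $y$,
\[
J_c(y)v = \lim_{t\to 0^+}\frac{c(y+tv)-c(y)}{t}.
\]
By Assumption~\ref{assumption:Lipschitz}, every difference quotient has norm at most $G\|tv\|_2/t = G$. Continuity of the norm then gives $\|J_c(y)v\|_2\le G$. Taking the supremum over unit $v$ yields $\|J_c(y)\|_2\le G$.

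Second, $H_k = J_kJ_k^\top$ is symmetric positive semidefinite, so its largest eigenvalue equals its largest Rayleigh quotient:
\[
\lambda_{\max}(H_k)=\max_{\|u\|_2=1} u^\top J_kJ_k^\top u=\max_{\|u\|_2=1}\|J_k^\top u\|_2^2=\|J_k^\top\|_2^2=\|J_k\|_2^2 .
\]
Combining this with the first step at $y=y_k$ gives $\lambda_{\max}(H_k)\le G^2$.

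The only step needing care is the first one, passing from the Lipschitz inequality on $c$ to a bound on its derivative. The concern is that $y_k$ might lie on the boundary of the domain of interest, where only one-sided directional limits are available. I would handle this by taking the domain to be open, or convex with nonempty interior, and using one-sided limits along admissible directions. If needed, I would then extend to boundary points by continuity of $J_c$; this only requires that the iterates lie in a region on which the Lipschitz bound holds, consistent with the remark following Lemma~\ref{lem:quadratic_upper_bound}. Everything else is standard linear algebra.
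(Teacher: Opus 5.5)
Your proposal is correct and follows the paper's proof essentially step for step. Like the paper, you bound $\|J_c(y)v\|_2$ by $G$ through the Lipschitz difference quotients, take the supremum over unit $v$ to get $\|J_c(y)\|_2\le G$, and conclude with $\lambda_{\max}(J_kJ_k^\top)=\|J_k\|_2^2$; your Rayleigh-quotient derivation of that last identity, which the paper leaves implicit, and your boundary caveat are extras. The caveat is unnecessary because Assumption~\ref{assumption:Lipschitz} is stated for all $y_1,y_2$ and $c$ is defined on all of $\mathbb{R}^n$, so two-sided limits at $y_k$ are available; your fallback via continuity of $J_c$ would also need a $C^1$ hypothesis that the assumption does not provide.
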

}

\revised{
\begin{proof}
Fix \(y\) and let \(J(y)=J_c(y)\). For any unit vector \(v\), differentiability
and \(G\)-Lipschitzness of \(c\) imply
\[
\|J(y)v\|_2
=
\lim_{t\to 0}\frac{\|c(y+tv)-c(y)\|_2}{|t|}
\le G.
\]
Taking the supremum over all \(\|v\|_2=1\) gives \(\|J(y)\|_2\le G\). Therefore,
\[
\lambda_{\max}(H_k)
=
\|J_k\|_2^2
\le G^2.
\]
This proves the claim.
\end{proof}
}

\revised{
Under the aforementioned assumptions, Theorem~\ref{thm:convergence} shows that the violation energy decays exponentially fast as a function of the number of iterations.
}

\revised{
\begin{theorem}
\label{thm:convergence}
Consider the violation energy $V(y)=\frac{1}{2}\|r(y)\|_2^2$, with minimizer $y^\star$. Suppose \(V(y)\) is \(L\)-smooth and satisfies the \(\mu\)-P\L~condition, and the stacked constraint vector \(c(y)\) is \(G\)-Lipschitz. If the update step is defined as
\[
\Delta y_k = -J_k^\top (J_kJ_k^\top + \varepsilon I)^{-1} r(y_k),
\qquad J_k := J_c(y_k),
\]
then for any regularization parameter choice \(\varepsilon > \frac{L}{2}\), the violation energy decays as follows:
\[
V(y_{k+1})-V(y^\star)
\le
\bigl(1-2\mu c(\varepsilon)\bigr)\bigl(V(y_k)-V(y^\star)\bigr),
\]
where
\[
c(\varepsilon)
:=
\min\left\{
\frac{1}{\varepsilon}-\frac{L}{2\varepsilon^2},
\;
\frac{1}{\varepsilon+G^2}-\frac{L}{2(\varepsilon+G^2)^2}
\right\}
> 0.
\]
In particular, if there exists a feasible point, then 
\(V(y^\star)=0\) and
\[
V(y_k)\le \bigl(1-2\mu c(\varepsilon)\bigr)^k V(y_0),
\]
so \(V(y_k)\to 0\) exponentially fast.
\end{theorem}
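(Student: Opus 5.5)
The plan is to apply the quadratic upper bound of Lemma~\ref{lem:quadratic_upper_bound} along the step $\Delta y_k$, rewrite $\Delta y_k$ as a preconditioned gradient step on $V$, and then convert the resulting decrease into a contraction via the P\L~condition (Assumption~\ref{assumption:PL}).

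\emph{Step 1: the step is a preconditioned gradient step.} First I will check that $J_k^\top r(y_k)=J_{r,k}^\top r(y_k)=\nabla V(y_k)$. Component-wise, the rows of $J_r$ and $J_c$ coincide for equality constraints and for strictly violated inequality constraints, since there $r_i=c_i-b_i^u$ or $r_i=c_i-b_i^l$ locally. For satisfied inequality constraints we have $r_i(y_k)=0$, so the $i$-th row contributes nothing to either product, whatever the row is. This also handles the ReLU kink at the constraint boundary, because $r_i^2$ is differentiable there with zero gradient. Next, using the push-through identity already used in \eqref{eq:LM},
\[
\Delta y_k=-(J_k^\top J_k+\varepsilon I)^{-1}J_k^\top r(y_k)=-M_k^{-1}g_k,\qquad M_k:=J_k^\top J_k+\varepsilon I,\; g_k:=\nabla V(y_k).
\]
By Lemma~\ref{lem:bounded_eigenvals}, the nonzero eigenvalues of $J_k^\top J_k$ equal those of $H_k$ and are at most $G^2$. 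Hence every eigenvalue of $M_k$ lies in $[\varepsilon,\varepsilon+G^2]$.

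\emph{Step 2: per-step decrease.} Plugging $\Delta y_k$ into Lemma~\ref{lem:quadratic_upper_bound} gives
\[
V(y_{k+1})\le V(y_k)-g_k^\top M_k^{-1}g_k+\tfrac{L}{2}g_k^\top M_k^{-2}g_k .
\]
Diagonalize the symmetric matrix $M_k=Q\Lambda Q^\top$ and write $\tilde g=Q^\top g_k$. The right-hand side then equals
\[
V(y_k)-\sum_j \phi(\lambda_j)\tilde g_j^2,\qquad \phi(\lambda):=\frac1\lambda-\frac{L}{2\lambda^2}.
\]
The key elementary claim is that $\min_{\lambda\in[\varepsilon,\varepsilon+G^2]}\phi(\lambda)=c(\varepsilon)$. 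Indeed, $\phi'(\lambda)=(L-\lambda)/\lambda^3$, so $\phi$ increases on $(0,L]$ and decreases on $[L,\infty)$. A function that is unimodal with an interior maximum attains its minimum over an interval at an endpoint, which yields exactly the two terms in $c(\varepsilon)$. Positivity of $c(\varepsilon)$ follows because $\phi(\lambda)>0$ iff $\lambda>L/2$, and both endpoints exceed $L/2$ when $\varepsilon>L/2$. Therefore
\[
V(y_{k+1})\le V(y_k)-c(\varepsilon)\|g_k\|_2^2 .
\]

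\emph{Step 3: contraction.} The P\L~condition gives $\|g_k\|_2^2\ge 2\mu\,(V(y_k)-V(y^\star))$. Subtracting $V(y^\star)$ from both sides of the Step 2 bound yields the stated inequality with factor $1-2\mu c(\varepsilon)$. This factor is nonnegative, since $V(y_{k+1})-V(y^\star)\ge0$ forces $2\mu c(\varepsilon)\le1$ unless $V(y_k)=V(y^\star)$. If a feasible point exists, then $V(y^\star)=0$, and iterating the inequality gives the geometric bound $V(y_k)\le(1-2\mu c(\varepsilon))^kV(y_0)$.

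I expect the main obstacle to be Step 1, namely the identification $J_c^\top r=\nabla V$ at points where some inequality constraint is active. The sign conventions of $r$ for lower versus upper bounds must be checked, and differentiability of $V$ must be confirmed despite the ReLU. If this identification failed, the descent inequality would not close. The eigenvalue-endpoint argument in Step 2 is routine by comparison.
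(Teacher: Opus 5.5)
Your proposal is correct and is essentially the paper's proof: the descent lemma along $\Delta y_k$, the identity $\nabla V(y_k)=J_k^\top r_k$, a spectral endpoint bound giving $V(y_k)-V(y_{k+1})\ge c(\varepsilon)\|\nabla V(y_k)\|_2^2$, and then the P\L~condition. The differences are only cosmetic: you diagonalize $J_k^\top J_k+\varepsilon I$ in variable space rather than $H_k=J_kJ_k^\top$ (your $\phi(\lambda)$ is the paper's $f(x)$ under $x=1/\lambda$), and you deduce $1-2\mu c(\varepsilon)\ge 0$ from $V(y_{k+1})\ge V(y^\star)$, whereas the paper uses $\max f=\frac{1}{2L}$ together with $L\ge\mu$.
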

}
\revised{
\begin{proof}
We begin by defining
$r_k := r(y_k)$, $J_k := J_c(y_k)$, and $H_k := J_kJ_k^\top$,
and substituting the update step
\[
\Delta y_k = -J_k^\top (H_k + \varepsilon I)^{-1}r_k
\]
into the quadratic upper bound:
\[
V(y_{k+1})
\le
V(y_k)+\nabla V(y_k)^\top \Delta y_k+\frac{L}{2}\|\Delta y_k\|_2^2.
\]
Now,
\[
\begin{split}
V(y)
&=
\frac{1}{2}\sum_i [\mathrm{ReLU}(c_i(y)-b^u)-\mathrm{ReLU}(b^l-c_i(y))]^2.
\end{split}
\]
Since the scalar function \(\phi(z)=\frac{1}{2}[\mathrm{ReLU}(z-b^u)-\mathrm{ReLU}(b^l-z)]^2\) is continuously differentiable with derivative \(\phi'(z)=\mathrm{ReLU}(z-b^u)-\mathrm{ReLU}(b^l-z)\), the chain rule gives
$
\nabla V(y_k)
=
J_k^\top r_k
$.
Therefore,
\[
\nabla V(y_k)^\top \Delta y_k
=
-r_k^\top H_k(H_k+\varepsilon I)^{-1}r_k,
\]
and
\[
\|\Delta y_k\|_2^2
=
r_k^\top (H_k+\varepsilon I)^{-1}H_k(H_k+\varepsilon I)^{-1}r_k.
\]
Substituting into the quadratic upper bound yields
\begin{align*}
V(y_{k}) - V(y_{k+1}) &\ge r_k^\top  H_k(H_k + \varepsilon I)^{-1}r_k \\ &- \frac{L}{2} r_k^\top  (H_k + \varepsilon I)^{-1} H_k (H_k + \varepsilon I)^{-1}r_k
\end{align*}
Let \(H_k = Q_k\Lambda_kQ_k^\top\) be the eigendecomposition of $H_k$, where
\[
\Lambda_k = \mathrm{diag}(\lambda_{k,1},\dots,\lambda_{k,m}),
\]
and let \(z_k = Q_k^\top r_k\). Lemma~\ref{lem:bounded_eigenvals} implies
$\forall i: 0\le \lambda_{k,i}\le G^2$. Therefore,
\[
V(y_k)-V(y_{k+1})
\ge
\sum_i
\left(
\frac{\lambda_{k,i}}{\lambda_{k,i}+\varepsilon}
-
\frac{L \cdot \lambda_{k,i}}{2(\lambda_{k,i}+\varepsilon)^2}
\right)
z_{k,i}^2.
\]
Let
$x_i = \frac{1}{\lambda_{k,i}+\varepsilon}$, and
$f(x)=x\left(1-\frac{L}{2}x\right)$.
Then,
$x_i \in \left[\frac{1}{\varepsilon+G^2},\,\frac{1}{\varepsilon}\right]$.
Since \(f\) is a concave quadratic, its minimum over this interval is attained at an endpoint. Therefore,
\[
f(x_i)\ge c(\varepsilon)
=
\min\left\{
\frac{1}{\varepsilon}-\frac{L}{2\varepsilon^2},
\;
\frac{1}{\varepsilon+G^2}-\frac{L}{2(\varepsilon+G^2)^2}
\right\}.
\]
Substituting this lower bound gives
\begin{align*}
V(y_k)-V(y_{k+1}) 
\ge c(\varepsilon)\sum_i \lambda_{k,i}z_{k,i}^2 
&= c(\varepsilon)\,r_k^\top H_k r_k \\
= c(\varepsilon)\,\|J_k^\top r_k\|_2^2 
&= c(\varepsilon)\,\|\nabla V(y_k)\|_2^2.
\end{align*}
Applying the \(\mu\)-P\L~condition yields
\[
V(y_k)-V(y_{k+1})
\ge
2\mu c(\varepsilon)\bigl(V(y_k)-V(y^\star)\bigr),
\]
or equivalently,
\[
V(y_{k+1})-V(y^*)
\le
\bigl(1-2\mu c(\varepsilon)\bigr)\bigl(V(y_k)-V(y^\star)\bigr).
\]
Iterating this inequality gives
\[
V(y_k)-V(y^*)
\le
\bigl(1-2\mu c(\varepsilon)\bigr)^k \bigl(V(y_0)-V(y^\star)\bigr).
\]
Importantly, the decay factor $1-2\mu c(\varepsilon)$ remains between $0$ and $1$. Because $\varepsilon > \frac{L}{2}$, we have $c(\varepsilon) > 0$ and therefore $1-2\mu c(\varepsilon)<1$. Additionally, since the global maximum of $f$ is $\frac{1}{2L}$ and $L \geq \mu$ \cite{karimi2016linear}, $f(x_i) \leq \frac{1}{2\mu}$. This guarantees that the decay factor $1 - 2\mu c(\varepsilon)$ remains non-negative, validating the monotonic decay. Finally, since \(V(y_{k+1})\le V(y_k)\), the iterates remain in the initial sublevel set of \(V\), so the proof works under a local smoothness bound over that region. This completes the proof.
\end{proof}
}

\section{Empirical Validation}\label{sec:results}
We demonstrate the ability of our proposed methodology to approximate a constrained optimization solver, as described in Section~\ref{sec:learned_opt}. 
We use an illustrative model predictive control example \revised{of a unicycle with nonlinear dynamics and an obstacle avoidance constraint on the state. 
\begin{equation}\label{eq:mpc}
    \begin{split}
        \underset{z,u}{\text{min }} & \sum_{k=0}^{N-1} ||z_{k+1}-z_{\text{target}}||_{W_k}^2 + ||u_k||_R^2\\ &z_k=[x_k,y_k,\theta_k]^\top, u=[v_k,\omega_k]^\top\\
        &\text{subject to}\\
        &x_{k+1}=x_k+\Delta t v_k \cos \theta_k\quad \forall k\in\{0,1,...N-1\}\\
        & y_{k+1}=y_k+\Delta t v_k \sin \theta_k\quad \forall k\in\{0,1,...N-1\}\\
        & \theta_{k+1}=\theta_k+\Delta t \omega_k \quad \forall k\in\{0,1,...N-1\}\\
        &b^l\leq u_k\leq b^u\quad \forall k\in\{0,1,...N-1\}\\
        & [x_k,\,\,y_k] Q[x_k,\,\,y_k]^\top \geq z_b \quad \forall k\in\{1,...,N\}\\
        & z_0 = z_{\text{in}}
    \end{split}
\raisetag{20pt}
\end{equation}
}
We use $W_k =\operatorname{diag}(1,1,1)$ for all nonterminal stages and $W_k=\operatorname{diag}(10,10,1)$ for the terminal stage ($k=N-1$) and $R = \operatorname{diag}(0.1,0.1)$. 
Box constraints are enforced on the controller $u$ with \revised{$b^l=[0,-1.5]^\top$ and $b^u=[2,1.5]^\top$, and the lower bound on the quadratic constraint is $z_b=1$. We use $Q=\text{diag}([0.51,0.91])$ to define the obstacle and $z_\text{target}=[3.5,0,0]^\top$.} We aim to find a mapping from the initial condition $z_{\text{in}}$ to the optimal state and outputs $z$ and $u$ across the horizon $N$. That is, we learn $f_\theta:\mathbb{R}^{n_z}\to\mathbb{R}^{N(n_u+n_z)}$ such that the dynamics constraints as well as the constraints on the states and controllers are satisfied. \revised{We use $N=10$, so the problem has 50 variables and 80 constraints (50 inequalities and 30 equalities).} We use a feed-forward neural network with 2 hidden layers and 200 neurons each. For the constraint satisfaction layer, we use $\varepsilon=0.3$ and $500$ linearized iterations. \revised{Rather than using a supervised learning approach,} we elect to directly minimize the MPC objective during training. 

We test our trained model on 100 random feasible initial conditions and compare to a soft-constrained baseline \revised{and DC3~\cite{donti2021dc3}. For DC3, we try the best hyperparameters (step size, regularization coefficient, and number of iterations) reported in~\cite{donti2021dc3} and perform hyperparameter tuning ourselves, and we report the most favorable results}. \revised{To evaluate solution quality, reference trajectories are computed by directly solving \eqref{eq:mpc} in CasADi~\cite{Andersson2019CasADi} using IPOPT~\cite{wachter2006implementation}, a standard large-scale nonlinear programming solver widely used in nonlinear MPC.} 
We calculate both an average and a maximum suboptimality across test samples, defined as $S = \frac{\max(0,F(\bar{z},\bar{u})-F(z^*,u^*))}{F(z^*,u^*)},$ where $F$ is the objective function in \eqref{eq:mpc}, $z^*$ and $u^*$ are the optimal state and control variables determined by the solver, and $\bar{z}$ and $\bar{u}$ are those output by the model. 
We report the average and maximum constraint residual across all constraints and all test cases in Table~\ref{tab:mpc_results}.


\begin{table}[h]
\caption{Average (avg) and maximum (max) suboptimality and constraint violations across 100 initial conditions}
\label{tab:mpc_results}
\setlength{\tabcolsep}{2pt}
\centering
\begin{tabular}{l|cccccc}
\toprule
\multicolumn{1}{c}{ }&
\multicolumn{2}{c}{\name} &
\multicolumn{2}{c}{DC3} &
\multicolumn{2}{c}{Soft-Constrained}\\
\cmidrule(r){2-3}
\cmidrule(l){4-5}
\cmidrule(l){6-7}
\multicolumn{1}{c}{} &
Avg & Max &
Avg & Max &
Avg & Max \\
\midrule
Suboptimality & \textbf{0.0137} & \textbf{0.181} & 0.136 & 1.22 & 0.248 & 1.23 \\
Dynamics & 1.34e-3 & 0.0578 & \textbf{0.0} & \textbf{0.0} &8.51e-3 & 0.209 \\
$u$ constraint  & 9.06e-5 & \textbf{9.89e-3} & \textbf{2.63e-5} & 0.0211 & 2.74e-4 & 0.109 \\
$z$ constraint  & \textbf{4.47e-5} & \textbf{5.74e-3} & 2.13e-4 & 0.0663 & 1.85e-4 &0.0446\\
\bottomrule
\end{tabular}
\end{table}

\revised{Figure~\ref{fig:x_results} shows the state trajectory of a closed-loop system, where the optimization problem in \eqref{eq:mpc} is solved (IPOPT MPC) or predicted (\name, DC3, and soft controller) at each state to select the control action. The \name\ controller drives the state toward the target, closely matching the true optimal trajectory. Importantly, the learned controller avoids the obstacle, demonstrating the ability of \name\ to enforce feasibility on network outputs. The DC3 controller also avoids the obstacle, but follows a non-optimal path toward the target. These results are consistent with Table~\ref{tab:mpc_results}. The \name\ controller achieves the lowest suboptimality while maintaining near-zero constraint violations across all constraints. In contrast, DC3 exhibits significantly higher suboptimality with larger worst-case inequality violations, and the soft-constrained baseline performs worst in terms of optimality and worst-case constraint violations. Overall, \name\ combines near-optimal performance with consistently tight constraint enforcement.}




\begin{figure}[t]
\centering

\includegraphics[width=0.9\linewidth]{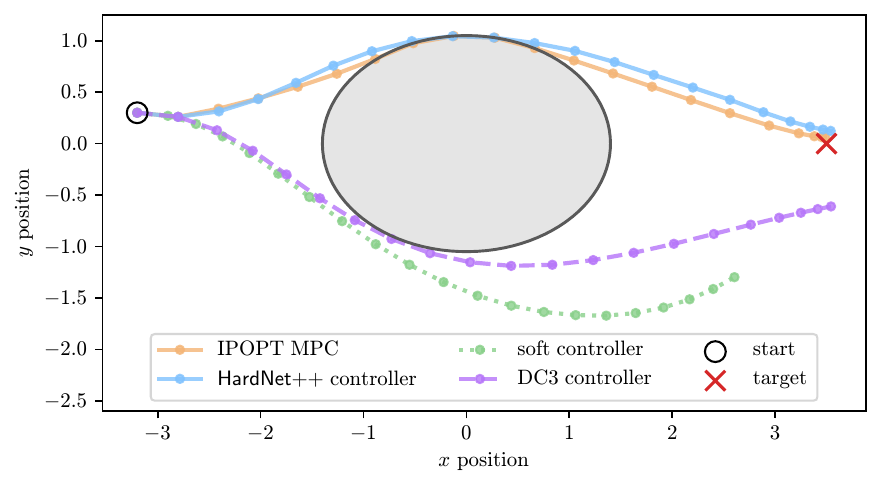}

\caption{\revised{Closed-loop state trajectories for learned controllers and a reference optimal controller for a sample initial condition. The feasible region is the region outside the grey ellipse. The \name\ controller aligns closely with the optimal controller, demonstrating the capacity of the constrained network to approximate the optimization solver.}}
\label{fig:x_results}

\end{figure}

\section{Conclusion}
To address the challenge of enforcing constraints in neural networks, we introduced \revised{\name}, a differentiable enforcement method for nonlinear inequality and equality constraint satisfaction. Unlike many existing approaches, this method can handle nonlinear inequality and equality constraints simultaneously and, under standard regularity assumptions, is theoretically guaranteed to converge to the feasible set. The theoretical and empirical results of this paper suggest that \revised{\name\ }provides a practical mechanism for incorporating hard constraints into network models, with applications in learned optimization and scientific machine learning. Future work could explore \revised{low-rank updates to the Jacobian matrix at each iteration to improve the efficiency of the procedure}, as well as applications of \revised {\name\ }to larger-scale control and learning problems.

\addtolength{\textheight}{-12cm}   




\bibliographystyle{IEEEtran}
\bibliography{refs}

\end{document}